\documentclass[mlabstract]{jmlr}

\usepackage[utf8]{inputenc}
\usepackage[T1]{fontenc}
\usepackage{hyperref}
\usepackage{url}
\usepackage{booktabs}
\usepackage{amsfonts}
\usepackage{amsmath}
\usepackage{amssymb}
\usepackage{nicefrac}
\usepackage{xcolor}
\usepackage{graphicx}

\newcommand{\R}{\mathbb{R}}
\newcommand{\bPhi}{\Phi}

\jmlrvolume{}
\firstpageno{1}
\jmlryear{}
\jmlrpages{}
\jmlrproceedings{PMLR}{\small Accepted at the NeurReps 2026 Workshop (Extended Abstract Track), NeurIPS 2026}

\title[Impossibility Result and Equinorm Remedy]{An Impossibility Result
for ETF Target Geometry, and an Equinorm Remedy for Imbalanced Analytic
Continual Learning}

\author{\Name{Berk Sezer} \Email{berk.sezer@metu.edu.tr}\\
\addr Middle East Technical University, T\"urkiye}

\editor{}

\begin{document}

\maketitle

\begingroup
\renewcommand{\thefootnote}{}
\footnotetext{Code available at \texttt{https://github.com/berksezer9/etf-targets-acil}.}
\endgroup

\begin{abstract}
Analytic continual learning (ACL) methods such as ACIL and GACL fit a
ridge-regression classifier head on frozen features via the
Sherman--Morrison--Woodbury identity, achieving exemplar-free, exact updates. A separate line of work, Progressive Neural Collapse
(ProNC), shows standard (gradient-trained) continual classifiers benefit from a
progressively expanding Simplex Equiangular Tight Frame (ETF) target. It is
natural to ask whether fusing the two ideas, exact analytic updates and
ETF targets, would yield a better closed-form continual learner. We show this is
not the case as the fusion turns out to be identical to plain one-hot ACIL. Further,
Simplex ETFs are not nested across class counts, so the literal
``progressively expand the ETF'' construction stops yielding an ETF as new tasks arrive, and it introduces
catastrophic forgetting ($4\times$ that of a classic ACIL head). We therefore
separate an ETF's two defining properties, equiangularity and equinormality,
and apply each on its own to the classifier weights rather than the
targets, where the invariance above does not hold. Equinorm post-processing of
the fitted weights recovers both tail-class and
overall average accuracy across the entire long-tailed regime
from $5{:}1$ to $100{:}1$ imbalance (e.g.\ $+32.0$ tail-accuracy points and
$+16.0$ average-accuracy points at $100{:}1$ on CIFAR-100/ViT), transferring in
sign across two backbones and two datasets, at a mild cost of $5.4$
average-accuracy points on CIFAR-100/ViT on near-balanced streams (below
$\approx 3{:}1$). Equiangular geometry, by
contrast, is exactly inert on the target side but actively harmful on the
weight side, usually losing to plain one-hot. Hence, the generally useful
part of Neural Collapse geometry in Analytic Continual Learning is equinorm post-processing of the
weights, becoming more useful as class imbalance grows; and equiangularity is at best a no-op and induces catastrophic forgetting at worst.
\end{abstract}
\begin{keywords}
Neural Collapse, Simplex ETF, analytic continual learning, catastrophic
forgetting, class imbalance
\end{keywords}

\section{Introduction}
\label{sec:intro}

\textbf{Neural Collapse (NC)} \citep{papyan2020prevalence} describes the
geometry at the
terminal phase of training as: within-class features collapse to their means
(NC1), class means form a Simplex ETF (NC2), the classifier aligns with the
class means (NC3), and prediction reduces to a nearest-mean rule (NC4).
\citet{pronc2026} (ProNC) harnesses NC2 for continual learning (CL) by
progressively appending new class vertices to an expanding ETF target,
trained with a combined cross-entropy, alignment, and distillation loss.
\textbf{Analytic continual learning (ACL)} \citep{zhuang2022acil,zhuang2023gacl}
instead freezes a pretrained backbone and updates a ridge-regression head
recursively via the Sherman--Morrison--Woodbury (SMW) identity as new classes
arrive, exactly and without forgetting (\S\ref{sec:prelim}). However, its regression targets are plain one-hot
vectors, satisfying none of NC2's margin properties.

The natural fusion of ProNC and ACL replaces ACL's one-hot targets with ProNC's expanding
Simplex ETF, updated recursively via SMW instead of SGD, in the hope of
combining the advantages of both. Yet, we show this fusion
cannot improve on one-hot targets because the two are equivalent.
\S\ref{sec:thm1} makes this precise and verifies it up to floating-point
machine precision on real ViT and ResNet-50 features. \S\ref{sec:nonnested} then shows the
literal ``progressively expand the ETF'' idea is actively harmful, not merely
inert, due to a staleness artifact.
Naturally, performance diverges from plain ACIL (for better or worse) where the
invariance provably cannot apply: compressed frames and non-linear
post-processing of the fitted weights. We explore these settings by conducting experiments across two datasets (CIFAR-100, TinyImageNet), and two backbones (ResNet-50, ViT), with positive results mainly emerging from the equinorm property of ETFs, specifically column-norm equalization of weights.
Column-norm equalization itself is not new \citep{kang2020decoupling,alshammari2022weightbalancing}, but to the best of our knowledge, applying it to ACL and splitting it from equiangularity has not been previously investigated (full related-work comparison in Appendix~\ref{sec:related}).

\section{Preliminaries}
\label{sec:prelim}

Let $f_\theta$ be a frozen backbone producing features $\phi=f_\theta(x)\in\R^d$.
ACIL/GACL fit a ridge head $W_t=R_t^{-1}\bPhi_{1:t}^\top Y_{1:t}$, $R_t =
\bPhi_{1:t}^\top\bPhi_{1:t}+\lambda I_d$, $Y_{1:t}$ stacking one-hot rows; this
updates recursively via SMW as each task's $\bPhi_t$ arrives, without
revisiting past features, and it is identical to solving the ridge problem in
one batch over all data seen so far. A Simplex ETF $U\in\R^{m\times K}$ is a
maximally separated unit-norm frame, defined by two conditions: $\|u_c\|=1$
and $\langle u_c,u_{c'}\rangle=-\tfrac{1}{K-1}$ for $c\neq c'$, equivalently
$U^\top U=\tfrac{K}{K-1}(I_K-\tfrac1K\mathbf{1}\mathbf{1}^\top)$. The familiar
$\sum_c u_c=\mathbf{0}$ and $m\geq K{-}1$ follow from these rather than
supplementing them (Appendix~\ref{app:proofs}). NC2 states trained classifiers spontaneously converge to this
geometry; ProNC implements it as an explicit, progressively expanded target.

\section{The invariance theorem}
\label{sec:thm1}

To fuse these approaches, we replace the standard one-hot target matrix $Y$ with $E = YA^\top$. Here, $A\in\R^{m\times K}$ is a class-encoding matrix where the $c$-th column, $a_c$, represents the target vector for class $c$ (forming the ETF frame). The model is fit using the same ridge objective and evaluated by correlating the predicted target against each vertex: $s_A(x) := (\phi^\top W_A)A$, where the fitted weights are $W_A = R^{-1}\bPhi^\top E$.

We define a fitting rule as \emph{target-linear} if its fitted weights can be expressed as $W = ME$ for some matrix $M\in\R^{d\times N}$. Crucially, $M$ may depend on the features $\bPhi$ and any hyperparameters, but it must remain independent of the targets $E$. Basic ridge regression ($M = R^{-1}\bPhi^\top$), weighted ridge regression with arbitrary per-sample weights, ACIL/GACL's SMW recursion, and RanPAC's projected variant all satisfy this target-linear property.

\begin{theorem}[Target-geometry invariance]
\label{thm:invariance}
Assume the classification head is fitted by any target-linear rule and read out as described above. Then: \emph{(i)} For every target matrix $A\in\R^{m\times K}$ and every input $x$, the predicted scores satisfy $s_A(x) = s_{\text{onehot}}(x)\,(A^\top A)$. \\\emph{(ii)} Furthermore, if the Gram matrix takes the form $A^\top A = \alpha I_K + \beta\mathbf{1}\mathbf{1}^\top$ for some constants $\alpha>0$ and $\beta\in\R$, then the predictions perfectly match the one-hot case. Specifically, $\arg\max_c\,s_A(x)_c = \arg\max_c\,s_{\text{onehot}}(x)_c$ as sets, for every $x$.
\end{theorem}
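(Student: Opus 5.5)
The plan is a direct computation that uses only target-linearity, followed by an elementary argument about how an affine map acts on a score vector. For part \emph{(i)}, fix a target-linear rule with matrix $M$, which depends on $\bPhi$ and the hyperparameters but not on the targets. The one-hot fit is then $W_{\text{onehot}} = MY$, and its scores are $s_{\text{onehot}}(x) = \phi^\top M Y$, read out against the standard basis, i.e.\ with $A = I_K$. With targets $E = YA^\top$, the same rule gives $W_A = MYA^\top$, because the \emph{same} $M$ is used. Hence
\[
s_A(x) = \phi^\top W_A A = \phi^\top M Y A^\top A = s_{\text{onehot}}(x)\,(A^\top A).
\]
The only point to check is that each listed rule really uses an $M$ that is independent of $E$. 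For ridge and weighted ridge this is immediate. For the SMW recursion, I would note that it reproduces the batch ridge solution exactly (\S\ref{sec:prelim}). For RanPAC, the random projection is applied only to the features, so the map from targets to weights stays linear.

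For part \emph{(ii)}, substitute $A^\top A = \alpha I_K + \beta \mathbf{1}\mathbf{1}^\top$ into \emph{(i)}. Writing $s = s_{\text{onehot}}(x)$, this gives
\[
s_A(x) = \alpha s + \beta\,(s\mathbf{1})\,\mathbf{1}^\top,
\]
so every coordinate satisfies $s_A(x)_c = \alpha s_c + \gamma$ with $\gamma = \beta\sum_{c'} s_{c'}$. The shift $\gamma$ depends on $x$ but not on $c$. Since $\alpha>0$, the map $t\mapsto \alpha t+\gamma$ is strictly increasing. It therefore preserves the order of the coordinates, including ties, and the two argmax sets coincide.

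Finally, I would remark that the Simplex ETF satisfies the hypothesis of \emph{(ii)}. From \S\ref{sec:prelim}, $U^\top U = \tfrac{K}{K-1}I_K - \tfrac{1}{K-1}\mathbf{1}\mathbf{1}^\top$, which has the required form with $\alpha = \tfrac{K}{K-1} > 0$ and $\beta = -\tfrac{1}{K-1}$.

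No step is a serious obstacle; the main care is needed in part \emph{(i)}. There one must argue that "the same readout" for one-hot is the special case $A = I_K$. One must also confirm that the recursive and projected variants fall under target-linearity with an $M$ that is truly fixed across target choices.
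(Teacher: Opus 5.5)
Your proof is correct and follows the paper's argument essentially verbatim. Part \emph{(i)} is the associativity step $M(YA^\top)=(MY)A^\top$ (the paper's Lemma~\ref{lem:linear}) followed by multiplication by $A$, and part \emph{(ii)} expands $s_A(x)_c=\alpha s_c+\beta\sum_{c'}s_{c'}$ and uses that a class-independent shift with positive slope preserves every pairwise comparison, ties included (the paper's Lemma~\ref{lem:order}), the only difference being that the paper packages these two steps as separate lemmas while you inline them.
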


\begin{corollary}[ETF targets are a no-op]
\label{cor:etf}
Every Simplex ETF satisfies the Gram matrix condition $A^\top A = \tfrac{K}{K-1}I_K - \tfrac{1}{K-1}\mathbf{1}\mathbf{1}^\top$. Thus, Theorem~\ref{thm:invariance}(ii) applies directly by setting $\alpha=K/(K-1)>0$ and $\beta=-1/(K-1)$. Notably, the standard ETF properties---such as the zero-mean condition ($\sum_c a_c=\mathbf{0}$) and the dimensionality bound ($m\geq K{-}1$)---are mathematical \emph{consequences} of the two defining ETF conditions, rather than extra assumptions. Because $A^\top A$ is identical for every possible Simplex ETF of a given size $(K,m)$, the underlying geometry remains fixed. Consequently, using ETF targets with any target-linear head is prediction-identical to using plain one-hot ACIL. This equivalence holds true for every embedding dimension $m$, every ridge penalty $\lambda\geq 0$, at every task, and under any sample weighting.
\end{corollary}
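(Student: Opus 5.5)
The statement to prove is Corollary~\ref{cor:etf}. I would derive it in three short steps: compute the Gram matrix directly from the two defining conditions, read off the consequences as by-products, and then invoke Theorem~\ref{thm:invariance}(ii).

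\textbf{Step 1: the Gram matrix.} Let $A$ be a Simplex ETF with columns $a_1,\dots,a_K$. The $(c,c')$ entry of $A^\top A$ is $\langle a_c,a_{c'}\rangle$. By the defining conditions this equals $1$ when $c=c'$ and $-\tfrac{1}{K-1}$ otherwise. The matrix $\tfrac{K}{K-1}I_K-\tfrac{1}{K-1}\mathbf{1}\mathbf{1}^\top$ has exactly these entries:
\begin{itemize}
\item on the diagonal, $\tfrac{K}{K-1}-\tfrac{1}{K-1}=1$;
\item off the diagonal, $-\tfrac{1}{K-1}$.
\end{itemize}
So the two matrices agree entrywise. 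This gives the form required by Theorem~\ref{thm:invariance}(ii) with $\alpha=K/(K-1)$ and $\beta=-1/(K-1)$. Here $\alpha>0$ because $K\ge 2$, which is implicit in the off-diagonal condition.

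\textbf{Step 2: the consequences.} The zero-mean property follows from $\|\sum_c a_c\|^2=\mathbf{1}^\top A^\top A\mathbf{1}$. Substituting the Gram matrix gives
\[
\mathbf{1}^\top A^\top A\mathbf{1}=\tfrac{K}{K-1}K-\tfrac{1}{K-1}K^2=0,
\]
so $\sum_c a_c=\mathbf{0}$. For the dimension bound, write $A^\top A=\tfrac{K}{K-1}P$, where $P=I_K-\tfrac1K\mathbf{1}\mathbf{1}^\top$ is the orthogonal projector onto $\mathbf{1}^\perp$. Since $P$ has rank $K-1$, we get
\[
K-1=\operatorname{rank}(A^\top A)=\operatorname{rank}(A)\le m.
\]
Both properties are therefore consequences of the two defining conditions, not extra assumptions.

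\textbf{Step 3: applying the theorem.} The Gram matrix depends only on $K$, so every Simplex ETF of a given size has the same $A^\top A$. Theorem~\ref{thm:invariance}(ii) then yields the same argmax sets as one-hot for every $x$. For completeness, the argmax claim can be seen directly. Put $s=s_{\text{onehot}}(x)$, so that $s_A=\alpha s+\beta(s^\top\mathbf{1})\mathbf{1}^\top$. The second term is constant across classes $c$, so it does not affect the argmax. Since $\alpha>0$, the order of the entries of $\alpha s$ is the order of the entries of $s$.

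The remaining universality claims reduce to checking that each setting is covered by the target-linear hypothesis, since the theorem makes no assumptions beyond it:
\begin{itemize}
\item any embedding dimension $m$;
\item any $\lambda\ge 0$, including $\lambda=0$, provided $M$ is still a well-defined target-independent map, e.g.\ via a pseudoinverse;
\item any task index, because the SMW recursion reproduces the batch ridge solution, which is target-linear at each $t$;
\item any sample weighting, because weighted ridge is listed among the target-linear rules.
\end{itemize}

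The main subtlety lies in the task-index claim. The invariance requires the same frame $A$ to be used both to encode the targets and to read out the scores at the current class count $K$. The corollary therefore concerns a fixed ETF for the currently seen classes. It is not the progressively expanded, stale construction, which is not an ETF (\S\ref{sec:nonnested}).
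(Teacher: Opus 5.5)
Your proposal is correct and follows the paper's proof essentially step for step. It uses the same entrywise Gram-matrix match with $\alpha=K/(K-1)$, $\beta=-1/(K-1)$, the same rank argument via the projector $I_K-\tfrac1K\mathbf{1}\mathbf{1}^\top$, and the same appeal to Theorem~\ref{thm:invariance}(ii). The differences are minor: you get $\sum_c a_c=\mathbf{0}$ from $\|A\mathbf{1}\|^2=\mathbf{1}^\top A^\top A\mathbf{1}=0$ rather than the paper's ``$A\mathbf{1}$ lies in and is orthogonal to $\mathrm{range}(A)$'' phrasing, and your caveat that $\lambda=0$ needs a well-defined target-independent $M$ (e.g.\ via a pseudoinverse) is a sensible refinement the paper leaves implicit.
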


Bypassing this target-geometry invariance\footnote{All proofs, including those for the non-nestedness proposition below, are deferred to Appendix~\ref{app:proofs}.} requires either lowering the embedding dimension ($m<K{-}1$), where no Simplex ETF exists, or using non-linear post-processing on the weights. A similar invariance applies to the weight side (Corollary~\ref{cor:polar} in Appendix~\ref{app:proofs}): projecting fitted weights onto a Simplex ETF is prediction-identical to orthonormalizing them.

\section{Non-nestedness and the staleness artifact}
\label{sec:nonnested}
\label{sec:staleness}

Corollary~\ref{cor:etf} assumes a \emph{single, correctly-constructed} ETF at
the current class count $K$. The natural literal reading of ``progressively
expand'' -- keep existing vertex directions fixed, Gram--Schmidt new vertices
into the orthogonal complement of the existing span, then renormalize -- runs
into a structural obstruction: a Simplex ETF's defining constraints are
$K$-dependent \emph{global} constraints, so growing $K$ cannot preserve
existing vertex coordinates.

\begin{proposition}[Simplex ETFs are not nested]
\label{prop:nonnested}
Let $K\geq 2$, let $A\in\R^{m\times K}$ be a Simplex ETF for $K$ classes and
$A'\in\R^{m\times(K+1)}$ a Simplex ETF for $K{+}1$ classes. Then at most one
column of $A$ is also a column of $A'$. Consequently no expansion rule that
holds two or more existing vertices fixed can yield a Simplex ETF at the
larger class count, and the bound is tight: one vertex \emph{can} always be
preserved.
\end{proposition}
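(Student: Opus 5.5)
The plan is to compare the two pairwise inner products that the defining Gram conditions force. In $A$ every pair of distinct columns has inner product $-\tfrac{1}{K-1}$. In $A'$ every pair has inner product $-\tfrac{1}{K}$. Suppose two distinct columns $a_i,a_j$ of $A$ both appear among the columns of $A'$, say as $a'_p$ and $a'_q$. We first check that $p\neq q$. If instead $a_i=a_j$ as vectors, then $\langle a_i,a_j\rangle=\|a_i\|^2=1$, which contradicts $\langle a_i,a_j\rangle=-\tfrac{1}{K-1}<0$. So the columns of a Simplex ETF are pairwise distinct vectors, and two distinct columns of $A$ are sent to two distinct columns of $A'$. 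Their inner product must then equal both $-\tfrac{1}{K-1}$ and $-\tfrac{1}{K}$. These differ for every $K\geq 2$, which is a contradiction. Hence at most one column of $A$ is a column of $A'$. The consequence about expansion rules that hold two or more vertices fixed is just a restatement: any such rule would output a $K{+}1$-class ETF sharing at least two columns with $A$, which is impossible.

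For tightness we must show that, given any Simplex ETF $A$ for $K$ classes, there is a Simplex ETF $A'$ for $K{+}1$ classes containing one chosen column, say $a_1$. This needs enough room, so we assume $m\geq K$, the minimal dimension in which a $(K{+}1)$-ETF exists. (If $m=K-1$, no $A'$ exists at all, and the statement should be read as conditional on $A'$ existing.) First take any Simplex ETF $B\in\R^{m\times(K+1)}$, for instance the centered and rescaled standard basis of $\R^{K+1}$ embedded isometrically into $\R^m$. The Gram condition shows that every column has unit norm. Next choose an orthogonal matrix $Q\in O(m)$ with $Qb_1=a_1$; a Householder reflection works, since both vectors are unit. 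Then $A'=QB$ has the same Gram matrix as $B$, so it is a Simplex ETF, and its first column is $a_1$.

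The only step needing care is the tightness construction. We must confirm that orthogonal maps preserve the two defining conditions, which is immediate from $(QB)^\top(QB)=B^\top B$. We must also state the dimension hypothesis. The existence of $B$ when $m\geq K$ follows from the rank of the Gram matrix $\tfrac{K+1}{K}(I-\tfrac{1}{K+1}\mathbf{1}\mathbf{1}^\top)$, which is $K$ and matches the bound $m\geq K{-}1$ (at $K{+}1$ classes) discussed in the appendix.
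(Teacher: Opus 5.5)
Your proof is correct and follows the paper's argument step for step. Both show that the shared columns land on distinct columns of $A'$, because $\langle a_i,a_j\rangle=-\tfrac{1}{K-1}\neq 1$. Both then derive the clash $-\tfrac{1}{K-1}\neq-\tfrac{1}{K}$. For tightness, both apply an orthogonal map that sends one column of a $(K{+}1)$-class ETF to $a_1$, which preserves the Gram matrix. Your explicit $m\geq K$ caveat is a harmless clarification; the paper avoids needing it by rotating the given $A'$ itself, whose existence is already part of the hypothesis.
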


Every expansion after the first therefore leaves the frame off-ETF. A fused implementation must either (i) recompute the exact ETF at every task, landing back on prediction-identical-to-ACIL by Corollary~\ref{cor:etf} (head \texttt{etf\_exact}), or (ii) leave old target blocks in their original frame (\texttt{etf\_stale}). \texttt{onehot} and \texttt{etf\_exact} match completely, while \texttt{etf\_stale} causes catastrophic forgetting, collapsing monotonically from $94.8\%$ to $1.0\%$ by task 9 on CIFAR-100 ($4\times$ the forgetting of \texttt{onehot}; per-task curves in Figure~\ref{fig:staleness}, Appendix~\ref{app:extra}). This pattern holds across datasets and backbones (Appendix~\ref{app:extra}).

\section{Experiments and Results}
\label{sec:experiments}

\textbf{Setup.} We evaluate frozen ViT-B/16-in21k ($d=768$) and ResNet-50-in1k ($d=2048$) on 10-task CIFAR-100 and TinyImageNet streams. Ridge heads use float64 and $\lambda=10^{-2}$. Long-tailed streams use an exponential per-class budget at ratio $r$ (default $r=100$). Comparisons fit all heads on identical features within a single run (5 seeds).

\textbf{Weight-Side Interventions.} Since modifying the target geometry is either inert or harmful, we explore non-linear weight-side interventions instead. First, compressed frames ($m<K-1$) incur a severe accuracy penalty regardless of the frame family. Second, separating an ETF's properties reveals that imposing equiangularity alone on the weights is consistently harmful. Projecting weights onto an ETF is prediction-identical to orthonormalizing the column-centred weights, meaning the equiangularity itself contributes nothing (details for both in Appendix~\ref{app:extra}).

\textbf{Equinorm Post-Processing.} Conversely, equinorm post-processing of the fitted weights recovers both tail-class and overall average accuracy across the long-tailed regime ($r\ge5$). At $r=100$ on CIFAR-100/ViT, plain \texttt{onehot} reaches $67.8\%\pm1.7\%$ head-class accuracy but only $7.2\%\pm0.7\%$ on tail classes. Equinorm brings tail accuracy to $39.2\%\pm4.0\%$ ($+32.0$ points) and average accuracy up $+16.0$ points, while head-class accuracy remains statistically indistinguishable from baseline (Figure~\ref{fig:sweep}). This gain is smooth from $r\approx5$ onward and transfers across backbones and datasets (full sweep in Appendix~\ref{app:extra}, Tables~\ref{tab:transfer-cifar-vit}--\ref{tab:transfer-tin-resnet}), showing a mild cost only on near-balanced streams ($-5.4$ points at $r=1$).

\begin{figure}[tb]
  \centering
  \includegraphics[width=0.62\linewidth]{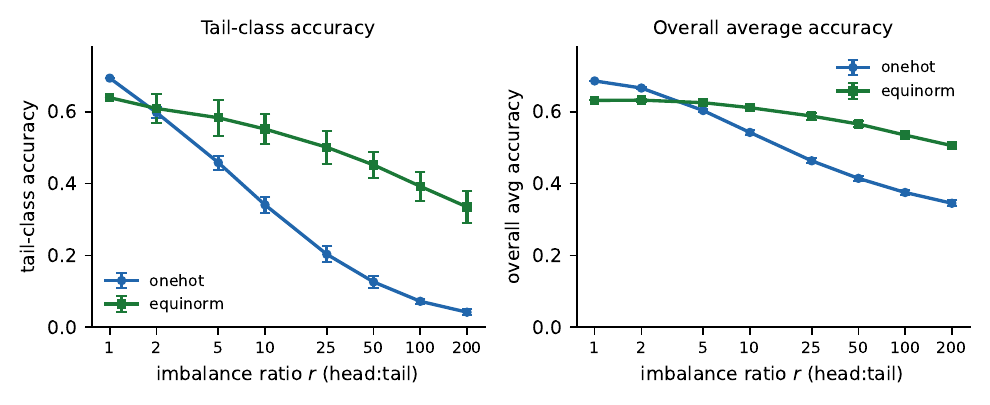}
  \caption{Imbalance-ratio sweep, $r\in\{1,\dots,200\}$, 5 seeds per ratio,
  CIFAR-100 T=10. \textbf{Left:} tail-class accuracy. \textbf{Right:} overall
  average accuracy. The crossover in overall accuracy lies between $r=2$ and
  $r=5$.}
  \label{fig:sweep}
\end{figure}

\section{Conclusion}
In summary, fusing Neural Collapse's ETF targets with analytic CL's exact updates provably changes nothing, and the literal construction's one departure---staleness in re-expanded target blocks---actively reintroduces the forgetting analytic CL exists to eliminate. Neural Collapse geometry helps only away from targets, where column-equinorm post-processing of fitted weights robustly recovers both tail and average-class accuracy in imbalanced datasets, while equal angles are usually harmful.

\bibliography{refs}

\appendix

\section{Related work}
\label{sec:related}

\textbf{Neural Collapse.} \citet{papyan2020prevalence} first characterized the
NC1--NC4 geometry. \citet{yang2022inducing} freezes (rather than learns)
an SGD-trained ETF classifier, avoiding imbalance-induced ``minority
collapse''; our contribution is instead an exact, closed-form argmax-identity
for any linear target recoding, needing no imbalance assumption or SGD.
\citet{pronc2026} expands a simplex ETF across tasks via Gram--Schmidt
extension under SGD, correcting the resulting old-class target drift with a
distillation loss (Appendix~\ref{app:relwork}); we study the same idea
distillation-free and in closed form, where that drift is either an exact
no-op or \S4's uncorrected staleness. \citet{momeni2026ananc} applies
analytic NC-compliant projections to frozen features for continual OOD
detection rather than class-incremental learning. \textbf{Analytic CL.} ACIL
\citep{zhuang2022acil} and GACL \citep{zhuang2023gacl} establish exact,
exemplar-free ridge-regression updates via the Woodbury identity; RanPAC
\citep{mcdonnell2023ranpac} prepends a frozen random projection before an
analogous head. None use equiangular targets; our theorem shows why that
would not have mattered. \textbf{Imbalanced analytic CL.} AIR
\citep{fang2024air} learns a per-class re-weighting factor at fit time; GSR
\citep{tran2026spectral} corrects tail-class spectral collapse in the Gram matrix
with an adaptive perturbation. Column-equinorm is a \emph{post-hoc} rescaling
of an already-fitted $W$, mechanistically distinct from both.
\citet{kang2020decoupling} first showed post-hoc $\tau$-normalization
($\tau{=}1$ is column-equinorm) fixes long-tail bias without retraining;
\citet{alshammari2022weightbalancing} instead correct the same imbalance with
training-time weight-decay+MaxNorm regularization, under which plain
L2-normalization alone underperforms -- a training-time sibling, not an
extension, of $\tau$-normalization. Applying $\tau$-normalization post-hoc to
a closed-form SMW head, and separating its equinorm and equiangular properties, is new;
that equiangularity specifically is the harmful half is consistent with
\citet{thrampoulidis2022imbalance}'s unconstrained-features-model result that
the imbalanced optimum departs from the exact ETF in angle as well as norm,
for any finite ridge strength (Appendix~\ref{app:extra},
\ref{app:relwork}).

\label{app:relwork}
The one-clause descriptions above are deliberately terse for space; the
remainder of this section records supporting detail on each cited work that
did not fit above.

\textbf{\citet{pronc2026} (ProNC).} Expands a simplex ETF across tasks by
Gram--Schmidt extension of the existing basis (Sec.~4.1 of that paper); this
is precisely the construction we study in \S\ref{sec:nonnested}, not a
simplified stand-in. Crucially, ProNC's own paper names the resulting
target-shift problem explicitly (``the vertices in the new ETF target that
match old classes will inevitably change after ETF expansion'') and provides
a correction for it: an alignment loss plus a knowledge-distillation loss
(their Eqs.~4--5) that stabilize old-class features across expansions, on top
of full backbone fine-tuning and (optionally empty) replay. Our staleness
result (\S4) is therefore not a bug found in ProNC itself, but a statement
about what happens to the same expanding-ETF idea with none of ProNC's
correction machinery available -- closed-form ridge regression on frozen
features has no gradient step in which a distillation loss could act. Under
that constraint the drift is either exactly undone (\texttt{etf\_exact}) or
left uncorrected (\texttt{etf\_stale}); there is no closed-form analogue of
distillation, which is precisely why the fusion collapses to those two cases.

\textbf{\citet{momeni2026ananc} (AnaNC).} Fits a closed-form ridge/ELM head to
per-class targets snapped to the nearest simplex ETF via orthogonal Procrustes
projection (their Eqs.~11--15), then performs both classification and
continual OOD detection by nearest-class-mean cosine similarity. Its target
Gram matrix satisfies exactly the $\alpha I+\beta\mathbf{1}\mathbf{1}^\top$
structure of Theorem~\ref{thm:invariance}, so its \emph{classification}
branch is a concrete instance of our identity: composing the ridge fit with
its cosine/NCM readout algebraically reduces to
$\mathrm{score}_c(x)=\alpha\, s_{\mathrm{onehot},c}(x) + \beta\sum_k
s_{\mathrm{onehot},k}(x)$, which is argmax-identical to \texttt{onehot}. Its
reported classification gains over one-hot-target baselines therefore cannot
be coming from the ETF recoding itself under matched features/$\lambda$; per
this paper's practice of comparing heads only in lockstep on identical
features, we do not draw that inference
without rerunning AnaNC ourselves, only note that the theorem predicts no
recoding-driven gain. Its \emph{OOD} gains are a different matter and are
correctly unaffected by our identity: OOD detection thresholds a continuous,
sample-dependent magnitude (the $\beta\sum_k s_{\mathrm{onehot},k}(x)$ term)
rather than taking a per-sample argmax across classes, which is exactly where
Theorem~\ref{thm:invariance}'s invariance does not apply. The broader
NC-for-OOD literature independently corroborates that this magnitude channel,
not equiangularity, is where the geometry does the work: \citet{haas2023l2norm}
show that L2-normalizing features (equinorm on the feature side) induces
earlier collapse and better OOD separation, and \citet{ammar2024neco} build a
general post-hoc OOD score directly on a sample's relative norm within the ETF
subspace -- the same magnitude channel AnaNC's classifier also thresholds.
Conversely, \citet{ray2026goen} find by ablation that CenterLoss, which forces
tighter feature compactness (the OOD analogue of snapping angles to the
constant ETF value), \emph{degrades} OOD AUROC despite improving
classification accuracy. That split -- norm-equalization helps, forced
angular rigidity hurts -- is the same equinorm/equiangular pattern this paper
identifies, recurring independently in a different task (OOD detection, not long-tailed
classification) and a different part of the network (raw features, not a
fitted linear head).

\textbf{\citet{mcdonnell2023ranpac} (RanPAC), mechanism correction.} RanPAC's
ridge head is exact-to-joint-training and one-hot-targeted like ACIL's, but is
not built on the Woodbury/Sherman--Morrison recursion: it accumulates a Gram
matrix and a class-sum matrix additively across tasks and solves once per
query, rather than maintaining an explicit recursive matrix inverse. It is
also developed independently of ACIL (no citation between them), not as a
lineage extension. \S\ref{sec:related}'s ``analogous head'' already hedges
this correctly (same input/output behavior and one-hot-ridge objective) and
should not be read as claiming the same update mechanism.
\textbf{\citet{zhuang2023gacl} (GACL).} Extends ACIL's identical Woodbury
recursion with an additional ``Exposed Class Label Gain'' term for classes
that reappear across tasks (generalized CIL), and reduces to ACIL's solution
exactly when tasks are disjoint.

\textbf{Corollary~\ref{cor:polar}, closest related mechanism.}
\citet{markou2024guiding} solves, at every SGD step, a Riemannian
(Stiefel-manifold) nearest-ETF projection of the current penultimate-layer
feature means -- the same orthogonal-Procrustes math underlying
Corollary~\ref{cor:polar}'s derivation, but applied to features during
training (to speed convergence to Neural Collapse) rather than to a
\emph{fitted, frozen} head at inference. To the best of our knowledge, no
prior work makes an argmax/prediction-equivalence statement about projecting
an already-fitted classifier onto the nearest ETF, so Corollary~\ref{cor:polar}
appears novel. \citet{thrampoulidis2022imbalance} independently support this
project's finding that flattening a fitted classifier's angles to the
constant ETF value hurts under imbalance: within the unconstrained-features
model, they prove that under class imbalance the optimal (SELI) geometry
already departs from the ETF in \emph{both} angle and norm, and that no
finite ridge strength restores exact equiangularity -- a theoretical account
of why \texttt{equiangular} (Appendix~\ref{app:extra}) loses to \texttt{onehot} in every run
tested here.

\section{Limitations}
\label{app:limitations}

We do not retrain ProNC (SGD-based) inside our pipeline, nor benchmark
directly against AIR or GSR: our theorem concerns the closed-form fusion
itself, not a head-to-head comparison. Evaluating across two backbones and two
datasets mitigates the risk of architecture- or domain-specific artifacts but
could be extended further, and the compressed-frame and the imbalance-ratio
results are tested over a bounded range.

\section{Proofs}
\label{app:proofs}

Throughout, $\bPhi\in\R^{N\times d}$ stacks the frozen features of every sample
seen so far, $Y\in\{0,1\}^{N\times K}$ is the one-hot label matrix, and
$A\in\R^{m\times K}$ has $c$-th column $a_c$, the target vector of class $c$.
A fitting rule is \emph{target-linear} if there is $M\in\R^{d\times N}$,
depending on $\bPhi$ and the hyperparameters but not on the targets, with
fitted weights $W=ME$ for every target matrix $E$. Ridge regression is
target-linear with $M=(\bPhi^\top\bPhi+\lambda I_d)^{-1}\bPhi^\top$; weighted
ridge with per-sample weights $\omega\in\R^N_{>0}$, $\Omega=\mathrm{diag}(\omega)$,
with $M=(\bPhi^\top\Omega\bPhi+\lambda I_d)^{-1}\bPhi^\top\Omega$. ACIL/GACL's
SMW recursion is target-linear because it computes exactly the ridge $M$
incrementally -- that is the content of its exactness guarantee -- and RanPAC's
head is the same with $\bPhi$ replaced by a fixed random projection of itself.

\subsection{Two lemmas}

\begin{lemma}[Target linearity]
\label{lem:linear}
Let $W\mapsto ME$ be target-linear. Then for every $A\in\R^{m\times K}$,
$W_A := M(YA^\top) = W_{\text{onehot}}A^\top$, where $W_{\text{onehot}}:=MY$.
\end{lemma}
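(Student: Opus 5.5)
The claim is immediate from associativity of matrix multiplication, so the plan is a single substitution. By the definition of a target-linear rule, the fitted weights for any target matrix $E\in\R^{N\times m}$ are $W=ME$, with the same $M$ for every $E$. We apply this to $E=YA^\top$ and regroup:
\[
W_A \;=\; M(YA^\top) \;=\; (MY)A^\top \;=\; W_{\text{onehot}}A^\top .
\]
The second equality uses associativity of matrix products, and the third is the definition $W_{\text{onehot}}:=MY$.

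The one point that needs care is checking that the \emph{same} $M$ is used on both sides. This is exactly what target-linearity requires: $M$ may depend on $\bPhi$ and on hyperparameters such as $\lambda$ or the per-sample weights $\Omega$, but never on the targets. Hence the $M$ that produces $W_{\text{onehot}}$ from $Y$ also produces $W_A$ from $YA^\top$.

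We will also note that the dimensions conform: $M\in\R^{d\times N}$, $Y\in\R^{N\times K}$ and $A^\top\in\R^{K\times m}$, so $W_A\in\R^{d\times m}$. No structure on $A$ is needed, so the identity holds for every $A\in\R^{m\times K}$, not only for ETFs.

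There is no real obstacle. The only subtlety is the SMW recursion. Its target-linearity rests on the fact, already stated above, that the recursion reproduces the batch ridge matrix $M$ exactly. We would rely on this exactness guarantee rather than re-deriving the recursion, since each Woodbury step is itself linear in the accumulated right-hand side $\bPhi^\top E$.
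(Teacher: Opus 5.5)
Your proposal is correct and matches the paper's proof. The paper argues exactly this way: associativity gives $M(YA^\top)=(MY)A^\top$, and the target-independence of $M$ guarantees that the same matrix produces both $W_{\text{onehot}}$ and $W_A$. Your dimension check and your remark on the SMW recursion are harmless additions.
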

\begin{proof}
Matrix multiplication is associative and $M$ does not depend on its target
argument, so $M(YA^\top)=(MY)A^\top$.
\end{proof}

Lemma~\ref{lem:linear} encapsulates all assumptions regarding the fitting
rule. Nothing below refers to ridge regression, $\lambda$, the SMW recursion,
or the sample weighting again; the subsequent results depend strictly on
target linearity, so any rule linear in its targets inherits all of what
follows. A rule that
inspects the targets -- tuning $\lambda$ by cross-validation on $E$, or an
early-stopped gradient fit -- falls outside the lemma, which is the formal
reason the result does not transfer to ProNC's SGD setting.

\begin{lemma}[Order invariance]
\label{lem:order}
Let $s,s'\in\R^K$ satisfy $s'=\alpha s+\gamma\mathbf{1}$ with $\alpha>0$ and
$\gamma\in\R$. Then $s'_c \geq s'_{c'} \iff s_c \geq s_{c'}$ for all $c,c'$;
in particular $\arg\max_c s'_c=\arg\max_c s_c$ as \emph{sets}, so any
deterministic tie-break selects the same class.
\end{lemma}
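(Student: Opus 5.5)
The statement to prove is Lemma~\ref{lem:order}. The plan is a direct elementwise computation, followed by a set-level reformulation of the argmax.

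\textbf{Step 1: pairwise order.} Fix indices $c,c'$. From $s'=\alpha s+\gamma\mathbf{1}$ we get
\[
s'_c - s'_{c'} = (\alpha s_c+\gamma)-(\alpha s_{c'}+\gamma)=\alpha\,(s_c-s_{c'}),
\]
because the additive constant $\gamma$ cancels. Since $\alpha>0$, multiplying by $\alpha$ preserves sign, and so does dividing by it. Hence $s'_c-s'_{c'}\ge 0$ if and only if $s_c-s_{c'}\ge 0$. The same argument gives the strict and equality versions: $s'_c>s'_{c'}\iff s_c>s_{c'}$ and $s'_c=s'_{c'}\iff s_c=s_{c'}$. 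The strict version is what makes tie structure match.

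\textbf{Step 2: argmax as sets.} I will write $\arg\max_c s_c=\{c:\ s_c\ge s_{c'}\ \forall c'\}$. By Step 1, the defining condition for $c$ holds for $s$ exactly when it holds for $s'$, for every $c'$. So the two sets coincide, including every tied maximizer.

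\textbf{Step 3: tie-breaking.} A deterministic tie-break is a function of the argmax set alone, for example the smallest index. Since the two sets are equal, the selected class is the same.

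The only point requiring care is the strict positivity of $\alpha$. If $\alpha=0$ every class would tie, and if $\alpha<0$ the order would reverse. I do not anticipate any other obstacle.
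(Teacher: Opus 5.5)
Your proof is correct and follows the paper's argument: both reduce to the identity $s'_c-s'_{c'}=\alpha(s_c-s_{c'})$ and use $\alpha>0$ to transfer signs, including equalities so that tie sets coincide. Ranging over $c'$ then gives equal argmax sets, and your remark on why $\alpha>0$ is needed matches the paper's note on the necessity of the hypotheses.
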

\begin{proof}
$s'_c-s'_{c'}=\alpha(s_c-s_{c'})$, since $\gamma$ cancels; as $\alpha>0$ the
two differences share a sign. Ranging $c'$ over all indices gives the argmax
claim, and equalities map to equalities, so tie sets coincide.
\end{proof}

Both hypotheses are necessary: if $\alpha<0$, the order of comparisons
reverses, and replacing $\gamma\mathbf{1}$ by a vector with unequal entries shifts pairwise
differences by that vector's gaps, which can be made arbitrarily large.

\subsection{Proof of Theorem~\ref{thm:invariance}}

\begin{proof}
\emph{(i)} By Lemma~\ref{lem:linear}, $W_A=W_{\text{onehot}}A^\top$, so
$s_A(x)=(\phi^\top W_{\text{onehot}}A^\top)A = s_{\text{onehot}}(x)(A^\top A)$.
\emph{(ii)} Write $s:=s_{\text{onehot}}(x)^\top$. With $A^\top A=\alpha
I_K+\beta\mathbf{1}\mathbf{1}^\top$,
\[
  s_A(x)_c \;=\; \sum_{c'} s_{c'}(A^\top A)_{c'c}
            \;=\; \alpha\,s_c \;+\; \beta\sum_{c'}s_{c'},
\]
whose second summand is the same number for every $c$. Apply
Lemma~\ref{lem:order} with $\gamma=\beta\,\mathbf{1}^\top s$, which depends on
$x$ but not on $c$.
\end{proof}

\subsection{Proof of Corollary~\ref{cor:etf}}

\begin{proof}
Let $A$ be a Simplex ETF for $K\geq2$ classes.

\emph{Gram form.} By definition $(A^\top A)_{cc}=\|a_c\|^2=1$ and $(A^\top
A)_{cc'}=\langle a_c,a_{c'}\rangle=-\tfrac1{K-1}$ for $c\neq c'$. The matrix
$\alpha I_K+\beta\mathbf{1}\mathbf{1}^\top$ with $\alpha=K/(K-1)$ and
$\beta=-1/(K-1)$ has exactly these entries, since $\alpha+\beta=
\tfrac{K}{K-1}-\tfrac{1}{K-1}=1$ on the diagonal and $\beta=-\tfrac1{K-1}$ off
it. Hence $A^\top A=\alpha I_K+\beta\mathbf{1}\mathbf{1}^\top$, and $\alpha>0$
because $K\geq2$.

\emph{$\sum_c a_c=\mathbf{0}$ and $m\geq K{-}1$ are consequences.} From the
Gram form, $(A^\top A)\mathbf{1}=(\alpha+K\beta)\mathbf{1}=\mathbf{0}$, i.e.\
$A^\top(A\mathbf{1})=\mathbf{0}$. Thus $A\mathbf{1}$ is orthogonal to
$\mathrm{range}(A)$ while also lying in it, so $A\mathbf{1}=\mathbf{0}$.
Further, $\mathrm{rank}\,A=\mathrm{rank}\,A^\top A=K-1$, because $I_K-\tfrac1K
\mathbf{1}\mathbf{1}^\top$ is the orthogonal projector onto $\mathbf{1}^\perp$;
since $\mathrm{rank}\,A\leq m$, necessarily $m\geq K-1$. (This is the sense in
which the compressed frames of \S\ref{sec:experiments} are outside the theorem: for
$m<K{-}1$ no Simplex ETF exists.)

\emph{Independence of the realization.} $A^\top A$ is determined by $K$ alone.
Equivalently, any two Simplex ETFs of the same $(K,m)$ are related by
$A'=QA$ for some $Q\in O(m)$, and $A'^\top A'=A^\top Q^\top QA=A^\top A$.

Theorem~\ref{thm:invariance}(ii) now applies at every task and for every
target-linear fit, giving prediction-identity with one-hot ACIL.
\end{proof}

\begin{remark}[Scope and boundary cases]
The paper's positive results occupy exactly the first two of the following
three exclusions. Corollary~\ref{cor:etf} does not apply to (i) $m<K{-}1$,
where no Simplex ETF exists; (ii) non-linear post-processing of the fitted
$W$, which violates Lemma~\ref{lem:linear}; or (iii) readouts that are not a
per-sample argmax across classes -- a method thresholding a score
\emph{magnitude}, as in continual OOD detection, does see the
$\beta\mathbf{1}^\top s$ term that argmax discards (Appendix~\ref{app:relwork}).
\end{remark}

\subsection{Proof of Proposition~\ref{prop:nonnested}}

\begin{proof}
Suppose two distinct columns of $A$ also appear in $A'$, say $a_c=a'_{j}$ and
$a_{c'}=a'_{j'}$ with $c\neq c'$. The indices $j,j'$ are distinct: if $j=j'$
then $a_c=a_{c'}$, contradicting $\langle a_c,a_{c'}\rangle=-\tfrac1{K-1}\neq
1=\|a_c\|^2$. Now $\langle a_c,a_{c'}\rangle$ must equal $-\tfrac1{K-1}$ by
$A$'s defining conditions and $-\tfrac1{K}$ by $A'$'s. These are unequal for
every $K\geq2$, a contradiction. Hence at most one column is shared.

For tightness, note $O(m)$ acts transitively on the unit sphere of $\R^m$, so
for any Simplex ETFs $A,A'$ there is $Q\in O(m)$ with $Qa'_1=a_1$; and $QA'$ is
again a Simplex ETF, since the defining conditions depend only on inner
products.
\end{proof}

The construction of \S\ref{sec:nonnested} holds \emph{all} $K$ existing
vertices fixed, so for $K\geq2$ it cannot produce a Simplex ETF -- which is
why the frame is exact at the first task, where it is constructed
independently rather than incrementally expanded, and deviates from an exact
ETF at all subsequent tasks. Note the proposition proves ``all
but at most one vertex must move,'' not ``every vertex moves''; the latter is
what is observed empirically but is not forced by the definitions.

\subsection{The weight-side corollary}

The same $\alpha I+\beta\mathbf{1}\mathbf{1}^\top$ structure has a second
consequence, on the weight side: one can leave targets alone and instead
impose ETF geometry on the \emph{fitted} weights, replacing $W$ by the nearest
Simplex ETF to it. That map is non-linear in $W$, so
Theorem~\ref{thm:invariance} does not apply to it -- and yet the
equiangularity it imposes is just as inert.

\begin{corollary}[Projecting $W$ onto an ETF is orthonormalization]
\label{cor:polar}
Let $K\geq2$, $d\geq K$, let $W\in\R^{d\times K}$ be the fitted weights,
$G=\tfrac{K}{K-1}(I_K-\mathbf{1}\mathbf{1}^\top/K)$ the Simplex-ETF Gram, and
consider
\[
  \min\bigl\{\,\|W-M\|_F \;:\; M\in\R^{d\times K},\; M^\top M=G \,\bigr\}.
\]
Write $G^{1/2}=c\,(I_K-\mathbf{1}\mathbf{1}^\top/K)$ with $c=\sqrt{K/(K-1)}$, and
let $WG^{1/2}=U\Sigma V^\top$ be a thin SVD. Then \emph{(a)} $M^\star:=O^\star
G^{1/2}$ with $O^\star=UV^\top$ is a minimizer, and if
$\mathrm{rank}(WG^{1/2})=K-1$ it is the \emph{unique} one -- even though
$O^\star$ itself is never unique; and \emph{(b)} for every
$\phi$, $\phi^\top M^\star = c\,(s_{\mathrm{pol}}-\bar s_{\mathrm{pol}}\mathbf{1}^\top)$ where $s_{\mathrm{pol}}=\phi^\top O^\star$ and $\bar
s_{\mathrm{pol}}$ is its mean over classes, so
$\arg\max_c(\phi^\top M^\star)_c=\arg\max_c (s_{\mathrm{pol}})_c$. Imposing
full ETF geometry on $W$ is therefore prediction-identical to replacing $W$ by
the orthonormal polar factor of the \emph{column-centred} weights $WG^{1/2}$;
the equiangularity contributes nothing beyond that.
\end{corollary}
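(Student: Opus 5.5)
The plan is to reduce the ETF-constrained nearest-matrix problem to an orthogonal Procrustes problem over the Stiefel manifold $\{O\in\R^{d\times K}: O^\top O=I_K\}$, solve that by the von Neumann trace inequality, and then read off (b) from Lemma~\ref{lem:order}. Throughout, write $P=I_K-\mathbf{1}\mathbf{1}^\top/K$. Since $P$ is an orthogonal projector, $G=c^2P$ and $G^{1/2}=cP$.

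\emph{Step 1 (parametrization).} I will show that the feasible set $\{M: M^\top M=G\}$ equals $\{OG^{1/2}: O^\top O=I_K\}$.

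One inclusion is immediate: $(OG^{1/2})^\top OG^{1/2}=G^{1/2}G^{1/2}=G$.

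For the other, take $M$ with $M^\top M=G$. Then $\|M\mathbf{1}\|^2=\mathbf{1}^\top G\mathbf{1}=0$, so $M\mathbf{1}=\mathbf{0}$, $MP=M$, and $\mathrm{rank}\,M=K-1$. Because $d\geq K$, there is a unit vector $u\perp\mathrm{range}(M)$. Set $O:=\tfrac1c M+u\mathbf{1}^\top/\sqrt K$. The cross terms vanish because $M^\top u=\mathbf{0}$, so $O^\top O=P+\mathbf{1}\mathbf{1}^\top/K=I_K$. Since $\mathbf{1}^\top P=\mathbf{0}^\top$, we get $OG^{1/2}=cOP=MP=M$. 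This is the only place $d\geq K$ enters, and I expect it to be the main subtlety: with $d=K-1$ no such $u$ exists.

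\emph{Step 2 (Procrustes).} With $B:=WG^{1/2}$,
\[
\|W-OG^{1/2}\|_F^2=\|W\|_F^2+\|G^{1/2}\|_F^2-2\,\mathrm{tr}(O^\top B),
\]
where $\mathrm{tr}(G^{1/2}O^\top W)=\mathrm{tr}(O^\top WG^{1/2})$ by cyclicity. So it suffices to maximize $\mathrm{tr}(O^\top B)$ over the Stiefel manifold.

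Write $B=\sum_i\sigma_iu_iv_i^\top$. Then
\[
\mathrm{tr}(O^\top B)=\sum_i\sigma_i\langle u_i,Ov_i\rangle\leq\sum_i\sigma_i,
\]
since $\|Ov_i\|=1$. Equality holds if and only if $Ov_i=u_i$ for every $i$ with $\sigma_i>0$. Thus $O^\star=UV^\top$ attains the bound, and $M^\star=O^\star G^{1/2}$ is a minimizer.

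\emph{Step 3 (uniqueness of $M^\star$, not of $O^\star$).} Since $B\mathbf{1}=cWP\mathbf{1}=\mathbf{0}$, we have $\mathrm{rank}\,B\leq K-1$. Hence there is always a zero singular value, and on the corresponding direction the maximizer $O$ is unconstrained. For instance, one can replace the singular pair $(u_K,v_K)$ by $(-u_K,v_K)$, or take any unit $\tilde u_K$ orthogonal to the other $u_i$. This shows $O^\star$ is never unique.

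If $\mathrm{rank}\,B=K-1$, the null right singular vector is forced to be $v_K=\mathbf{1}/\sqrt K$, and every maximizer agrees with $O^\star$ on $v_1,\dots,v_{K-1}$, which span $\mathbf{1}^\perp$. Since $G^{1/2}=cP$ depends on $O$ only through its restriction to $\mathbf{1}^\perp$, all maximizers give the same $OG^{1/2}$. Combined with Step 1, which makes every feasible $M$ of this form, $M^\star$ is the unique minimizer.

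\emph{Step 4 (readout).} For part (b), compute
\[
\phi^\top M^\star=c\,\phi^\top O^\star P=c\bigl(s_{\mathrm{pol}}-\bar s_{\mathrm{pol}}\mathbf{1}^\top\bigr),
\]
because right-multiplying by $P$ subtracts the mean over classes. Lemma~\ref{lem:order}, applied with $\alpha=c>0$ and $\gamma=-c\,\bar s_{\mathrm{pol}}$, then gives equality of the argmax sets. Finally, $O^\star=UV^\top$ is by definition the orthonormal polar factor of $WG^{1/2}=cWP$, which is $c$ times the column-centred weights, so the final interpretive sentence of the corollary follows.
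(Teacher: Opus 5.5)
Your proposal is correct and follows the paper's proof essentially step for step. Both use the same completion $O=c^{-1}M+u\mathbf{1}^\top/\sqrt K$ to parametrize the feasible set, the same Procrustes bound $\sum_i\sigma_i\langle u_i,Ov_i\rangle\le\sum_i\sigma_i$, and the same observation that $OP=\sum_{i<K}(Ov_i)v_i^\top$ removes the ambiguity in $u_K$. Your explicit equality case ($Ov_i=u_i$ whenever $\sigma_i>0$) is a small tightening: it shows that every maximizer yields the same $M^\star$, not just every choice of SVD, which is the point the paper's Step 4 leaves implicit.
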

\begin{proof}
\emph{Step 0 ($G^{1/2}$).} $P:=I_K-\mathbf{1}\mathbf{1}^\top/K$ is symmetric and
idempotent, so $(cP)^2=c^2P=G$ and $cP\succeq0$; hence $G^{1/2}=cP$ exactly,
with $G^{1/2}\mathbf{1}=\mathbf{0}$ and $\mathrm{rank}\,G^{1/2}=K-1$. No
eigendecomposition is needed.

\emph{Step 1 (the objective is a Procrustes objective).} Every feasible $M$ has
$\|M\|_F^2=\mathrm{tr}(M^\top M)=\mathrm{tr}\,G=K$, a constant. So
$\|W-M\|_F^2=\|W\|_F^2-2\,\mathrm{tr}(W^\top M)+K$, and minimizing $\|W-M\|_F$
is equivalent to maximizing $\mathrm{tr}(W^\top M)$.

\emph{Step 2 (the feasible set).} $\{M: M^\top M=G\}=\{OG^{1/2}: O\in
\R^{d\times K},\,O^\top O=I_K\}$. For $\supseteq$,
$(OG^{1/2})^\top(OG^{1/2})=G^{1/2}O^\top OG^{1/2}=G$. For $\subseteq$, let
$M^\top M=G$. Then $\|M\mathbf{1}\|^2=\mathbf{1}^\top G\mathbf{1}=0$, so $M\mathbf{1}=\mathbf{0}$, and $\mathrm{rank}\,M=\mathrm{rank}\,G=K-1$; as $d\geq K$ there is
a unit $q\perp\mathrm{range}(M)$. Put $O:=c^{-1}M+q\mathbf{1}^\top/\sqrt K$.
Using $M^\top q=\mathbf{0}$ and $\mathbf{1}^\top\mathbf{1}=K$,
\[
  O^\top O=c^{-2}M^\top M+\mathbf{1}\mathbf{1}^\top/K
          =c^{-2}G+\mathbf{1}\mathbf{1}^\top/K=P+\mathbf{1}\mathbf{1}^\top/K=I_K,
\]
and $OG^{1/2}=(c^{-1}M+q\mathbf{1}^\top/\sqrt K)(cP)=MP+c\,q(\mathbf{1}^\top
P)/\sqrt K=M$, since $MP=M-(M\mathbf{1})\mathbf{1}^\top/K=M$ and $\mathbf{1}^\top
P=\mathbf{0}^\top$. (This step, not the PSD-ness of $G$ alone, is what justifies
the parametrization.)

\emph{Step 3 (Procrustes).} By Step 2 the problem is to maximize
$\mathrm{tr}(W^\top OG^{1/2})=\mathrm{tr}((WG^{1/2})^\top O)$ over $O^\top
O=I_K$, using that $G^{1/2}$ is symmetric. With $WG^{1/2}=U\Sigma V^\top$ set
$Z:=U^\top OV$; then $\mathrm{tr}((WG^{1/2})^\top O)=\mathrm{tr}(\Sigma Z)=
\sum_i\sigma_i Z_{ii}$. Each column of $Z$ satisfies $\|U^\top Ov_i\|\leq
\|Ov_i\|=\|v_i\|=1$, so $|Z_{ii}|\leq1$ and $\mathrm{tr}(\Sigma Z)\leq
\sum_i\sigma_i$, attained by $O=UV^\top$, for which $Z=I_K$. Hence
$O^\star=UV^\top$ and $M^\star=UV^\top G^{1/2}$.

\emph{Step 4 ($M^\star$ is unique although $O^\star$ is not).} $WG^{1/2}=cWP$
has rank at most $K-1$, so $\sigma_K=0$ always and the last column of $U$ is an
arbitrary unit vector orthogonal to the others: $O^\star$ is never unique.
Since $PW^\top WP\mathbf{1}=\mathbf{0}$, the vector $\mathbf{1}/\sqrt K$ is a right
singular vector for singular value $0$; when $\mathrm{rank}(WG^{1/2})=K-1$ that
zero is simple, so $v_K=\pm\mathbf{1}/\sqrt K$ and $P=\sum_{i<K}v_iv_i^\top$.
Then
\[
  M^\star=cO^\star P=c\sum_{i<K}(O^\star v_i)v_i^\top=c\sum_{i<K}u_iv_i^\top
        =c\,(WG^{1/2})\bigl((WG^{1/2})^\top(WG^{1/2})\bigr)^{+1/2},
\]
the partial-isometry factor of the polar decomposition, which is a function of
$WG^{1/2}$ alone; the dependency on the ambiguous $u_K$ vanishes. If instead
$\mathrm{rank}(WG^{1/2})<K-1$ the zero singular value is not simple and
$M^\star$ is genuinely non-unique, so the rank hypothesis cannot be dropped.

\emph{Step 5 (scores).} $\phi^\top M^\star=\phi^\top O^\star G^{1/2}=
c\,(\phi^\top O^\star)P=c\,(s_{\mathrm{pol}}-\bar s_{\mathrm{pol}}\mathbf{1}^\top)$, which is Lemma~\ref{lem:order}'s form with $\alpha=c>0$ and
$\gamma=-c\,\bar s_{\mathrm{pol}}$. Replacing $u_K$ by any other admissible
$u_K'$ shifts $s_{\mathrm{pol}}$ by $\pm\bigl(\phi^\top(u_K'-u_K)\bigr)\mathbf{1}^\top/\sqrt K$, a constant common to every class, so
$\arg\max_c(s_{\mathrm{pol}})_c$ is well defined despite $O^\star$ not being.
\end{proof}

\textbf{Remark.} We note two technical subtleties regarding
Corollary~\ref{cor:polar}. First, $U$ and $V$ are the
singular vectors of $WG^{1/2}=c\,W(I-\mathbf{1}\mathbf{1}^\top/K)$, the
column-centred weights, \emph{not} of $W$; the polar factor of $W$ itself is a
different matrix and is not in general prediction-equivalent to $M^\star$.
Second, the corollary is an argmax statement only -- it does not say the two
score matrices are equal, and indeed they differ by the positive factor $c$ and
the per-sample shift. Numerically, on real CIFAR-100/ViT features,
\texttt{etf\_project}'s scores match $c\,(s_{\mathrm{pol}}-\bar
s_{\mathrm{pol}})$ to a maximum residual of $8.17\times10^{-14}$ with argmax
agreement $1.000000$. Every step above is additionally verified in isolation
in the accompanying code -- symbolically where the claim is an algebraic
identity, numerically otherwise -- and each hypothesis's necessity is
confirmed by a counterexample.

\section{Extended results: compressed frames and the weight-side experiments}
\label{app:extra}

\textbf{Target Invariance and Staleness.} On real CIFAR-100 features, Corollary~\ref{cor:etf} holds up to floating-point machine precision: the maximum absolute score difference between \texttt{etf\_exact} and $s_{\text{onehot}}(A^\top A)$ is $<1.1\times10^{-14}$ across backbones, with $1.0$ argmax agreement. The corresponding staleness-induced collapse of \texttt{etf\_stale}, quantified in \S\ref{sec:nonnested}, follows the same experimental protocol.

\textbf{Staleness detail.} Table~\ref{tab:main-results} gives the full
per-dataset, per-backbone breakdown behind Figure~\ref{fig:staleness}:
\texttt{onehot}/\texttt{etf\_exact} agree completely at every row (by
construction), while \texttt{etf\_stale} collapses everywhere, more severely
when the task count increases or when there are more classes per task.

\begin{figure}[h]
  \centering
  \includegraphics[width=0.32\linewidth]{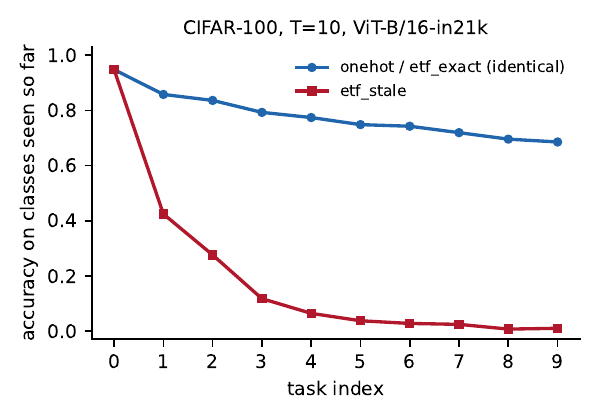}
  \caption{Per-task accuracy on classes seen so far, CIFAR-100 T=10, ViT.
  \texttt{onehot} and \texttt{etf\_exact} are identical at every
  task. \texttt{etf\_stale} ties at task 0 and then collapses as target
  staleness accumulates with drift.}
  \label{fig:staleness}
\end{figure}

\begin{table}[h]
\caption{Main CIL results, all heads fit in lockstep on identical features.}
\label{tab:main-results}
\centering
\footnotesize
\begin{tabular}{llrlrrr}
\toprule
dataset & backbone & $T$ & head & avg.\ acc.\ & avg.\ incr.\ acc.\ & forgetting \\
\midrule
CIFAR-100     & ViT       & 10 & onehot / etf\_exact & $0.6856$ & $0.7802$ & $0.0949$ \\
CIFAR-100     & ViT       & 10 & etf\_stale           & $0.0100$ & $0.1936$ & $0.3927$ \\
TinyImageNet  & ViT       & 10 & onehot / etf\_exact & $0.6675$ & $0.7354$ & $0.0829$ \\
TinyImageNet  & ViT       & 10 & etf\_stale           & $0.0050$ & $0.1773$ & $0.3557$ \\
CIFAR-100     & ResNet-50 & 20 & onehot / etf\_exact & $0.6933$ & $0.7924$ & $0.1044$ \\
CIFAR-100     & ResNet-50 & 20 & etf\_stale           & $0.0100$ & $0.1149$ & $0.2498$ \\
TinyImageNet  & ResNet-50 & 10 & onehot / etf\_exact & $0.7265$ & $0.7775$ & $0.0729$ \\
TinyImageNet  & ResNet-50 & 10 & etf\_stale           & $0.0050$ & $0.1851$ & $0.3813$ \\
\bottomrule
\end{tabular}
\end{table}

\textbf{Compressed frames detail.} Table~\ref{tab:compressed} gives the full compressed-frame
sweep, CIFAR-100 T=10, ViT backbone. \texttt{onehot} is undefined for $m<K{-}1{=}99$;
all three constructions incur a large accuracy cost relative to uncompressed
ACIL ($0.6856$), and no family shows a reliable advantage.
\texttt{truncated\_etf} builds the exact Simplex ETF in its natural $K{-}1$
dimensions, then projects it onto a random $m$-dimensional subspace and
renormalizes; we had expected it to retain some advantage from starting at
genuine ETF geometry before the dimensionality cut, but at this much
compression a random projection of a true ETF seems to have coherence close to a plain
random frame's, so little of the original structure survives.

\begin{table}[h]
\caption{Compressed-frame sweep, CIFAR-100 T=10, ViT backbone, mean $\pm$
sample stdev over 5 seeds.}
\label{tab:compressed}
\centering
\footnotesize
\begin{tabular}{lccc}
\toprule
$m$ & \texttt{welch} & \texttt{random} & \texttt{truncated\_etf} \\
\midrule
4  & $0.0182 \pm 0.0011$ & $0.0271 \pm 0.0048$ & $0.0192 \pm 0.0023$ \\
8  & $0.0445 \pm 0.0015$ & $0.0426 \pm 0.0037$ & $0.0403 \pm 0.0015$ \\
16 & $0.0631 \pm 0.0046$ & $0.0642 \pm 0.0028$ & $0.0614 \pm 0.0038$ \\
32 & $0.0780 \pm 0.0047$ & $0.0783 \pm 0.0054$ & $0.0754 \pm 0.0027$ \\
\bottomrule
\end{tabular}
\end{table}

\textbf{Equinorm and Imbalance Sweep detail.} Table~\ref{tab:sweep} gives the full imbalance-ratio
sweep behind Figure~\ref{fig:sweep} and \S\ref{sec:experiments}, CIFAR-100/ViT,
mean $\pm$ sample stdev over 5 seeds. Overall average accuracy and tail-class
accuracy move together: near balance ($r\le2$) equinorm is a net cost, but
from $r{=}5$ through $r{=}200$ it wins on both metrics simultaneously.

\begin{table}[h]
  \centering
  \caption{Full imbalance-ratio sweep, CIFAR-100/ViT, mean $\pm$ sample stdev
  over 5 seeds. \texttt{equinorm} trails \texttt{onehot} only at $r\le2$ and
  leads on both average and tail-class accuracy from $r{=}5$ onward.}
  \label{tab:sweep}
  \footnotesize
  \begin{tabular}{lrrrrrr}
    \toprule
    $r$ & \texttt{onehot} avg. & \texttt{equinorm} avg. & avg.\ $\Delta$ & \texttt{onehot} tail & \texttt{equinorm} tail & tail $\Delta$ \\
    \midrule
    1   & $0.686 \pm 0.000$ & $0.631 \pm 0.000$ & $-0.054$ & $0.693 \pm 0.000$ & $0.639 \pm 0.000$ & $-0.054$ \\
    2   & $0.666 \pm 0.002$ & $0.632 \pm 0.004$ & $-0.034$ & $0.598 \pm 0.016$ & $0.609 \pm 0.039$ & $+0.011$ \\
    5   & $0.603 \pm 0.005$ & $0.625 \pm 0.008$ & $+0.022$ & $0.458 \pm 0.020$ & $0.583 \pm 0.050$ & $+0.125$ \\
    10  & $0.542 \pm 0.008$ & $0.611 \pm 0.006$ & $+0.069$ & $0.341 \pm 0.023$ & $0.552 \pm 0.042$ & $+0.211$ \\
    25  & $0.463 \pm 0.007$ & $0.588 \pm 0.010$ & $+0.124$ & $0.203 \pm 0.022$ & $0.501 \pm 0.046$ & $+0.298$ \\
    50  & $0.415 \pm 0.006$ & $0.566 \pm 0.008$ & $+0.151$ & $0.126 \pm 0.017$ & $0.452 \pm 0.037$ & $+0.326$ \\
    100 & $0.375 \pm 0.007$ & $0.535 \pm 0.007$ & $+0.160$ & $0.072 \pm 0.007$ & $0.392 \pm 0.040$ & $+0.320$ \\
    200 & $0.345 \pm 0.008$ & $0.506 \pm 0.009$ & $+0.160$ & $0.042 \pm 0.008$ & $0.335 \pm 0.044$ & $+0.293$ \\
    \bottomrule
  \end{tabular}
\end{table}

\textbf{Weight-side Geometry detail.} Corollary~\ref{cor:polar} (Appendix~\ref{app:proofs})
establishes the algebra: imposing full ETF geometry on the fitted weights is
prediction-identical to orthonormalizing the column-centred weights, so the
equiangular half of the projection is inert. What follows is the empirical
half.

We separate an ETF's two properties with three post-processings of the same
fitted $W$: \texttt{equinorm} (equal norms, fitted angles kept),
\texttt{equiangular} (all pairwise cosines snapped to $-1/(K{-}1)$, fitted
norms restored), and \texttt{etf\_project} (both properties at once).
\texttt{etf\_project} is prediction-identical to the polar factor $UV^\top$ of
the \emph{column-centred} weights $WG^{1/2}$, confirming
Corollary~\ref{cor:polar} numerically (Appendix~\ref{app:proofs}). \texttt{equiangular} loses
to plain \texttt{onehot} on overall accuracy in all of $40$ runs on
CIFAR-100/ViT (by $6.9$--$10.6$ points), with tail-class accuracy below
\texttt{onehot}'s throughout: the fitted $W$'s pairwise cosines span
$[-0.91,+0.95]$, and flattening them to a constant $-0.0101$ discards real
class-similarity structure. Imbalance drives this almost entirely through
weight \emph{norms}, not angles: the fitted $W$'s equiangularity error
(spread of its pairwise cosines, $0$ for a true ETF) is $0.72$ at $r{=}1$
and $0.78$ at $r{=}100$, essentially unmoved, while its column-norm
ratio $\|w\|_{\max}/\|w\|_{\min}$ explodes from $5.2$ to $43.7$ over the
same range; equalizing norms alone therefore tracks imbalance, and
equalizing angles does not. \texttt{etf\_project} is nonetheless the best of
the four heads on this cell, beating \texttt{onehot} $40/40$ and
\texttt{equinorm} $40/40$. \texttt{etf\_project}'s edge over \texttt{equinorm} specifically
comes from the polar factor's columns being \emph{orthonormal}, not merely
equal-norm, so \texttt{etf\_project} additionally decorrelates class
directions that \texttt{equinorm} leaves at their fitted (correlated)
angles. Hence, its advantage here is an orthogonality effect, not an equiangularity one.

\begin{table}[h]
  \centering
  \caption{Overall average accuracy, CIFAR-100 / ViT, all four heads, full imbalance-ratio sweep (mean $\pm$ sample stdev, 5 seeds). Best per row in bold.}
  \label{tab:transfer-cifar-vit}
  \footnotesize
  \begin{tabular}{lcccc}
    \toprule
    $r$ & \texttt{onehot} & \texttt{equinorm} & \texttt{equiangular} & \texttt{etf\_project} \\
    \midrule
    1 & $0.686 \pm 0.000$ & $0.631 \pm 0.000$ & $0.580 \pm 0.000$ & $\mathbf{0.692 \pm 0.000}$ \\
    2 & $0.666 \pm 0.002$ & $0.632 \pm 0.004$ & $0.564 \pm 0.011$ & $\mathbf{0.686 \pm 0.001}$ \\
    5 & $0.603 \pm 0.005$ & $0.625 \pm 0.008$ & $0.507 \pm 0.024$ & $\mathbf{0.674 \pm 0.001}$ \\
    10 & $0.542 \pm 0.008$ & $0.611 \pm 0.006$ & $0.450 \pm 0.024$ & $\mathbf{0.659 \pm 0.002}$ \\
    25 & $0.463 \pm 0.007$ & $0.588 \pm 0.010$ & $0.381 \pm 0.018$ & $\mathbf{0.627 \pm 0.006}$ \\
    50 & $0.414 \pm 0.006$ & $0.566 \pm 0.008$ & $0.337 \pm 0.015$ & $\mathbf{0.597 \pm 0.006}$ \\
    100 & $0.375 \pm 0.007$ & $0.535 \pm 0.007$ & $0.301 \pm 0.012$ & $\mathbf{0.564 \pm 0.005}$ \\
    200 & $0.345 \pm 0.008$ & $0.505 \pm 0.009$ & $0.276 \pm 0.011$ & $\mathbf{0.529 \pm 0.007}$ \\
    \bottomrule
  \end{tabular}
\end{table}

\begin{table}[h]
  \centering
  \caption{Overall average accuracy, TinyImageNet / ViT, all four heads, full imbalance-ratio sweep (mean $\pm$ sample stdev, 5 seeds). Best per row in bold.}
  \label{tab:transfer-tin-vit}
  \footnotesize
  \begin{tabular}{lcccc}
    \toprule
    $r$ & \texttt{onehot} & \texttt{equinorm} & \texttt{equiangular} & \texttt{etf\_project} \\
    \midrule
    1 & $0.668 \pm 0.000$ & $0.601 \pm 0.000$ & $0.576 \pm 0.000$ & $\mathbf{0.677 \pm 0.000}$ \\
    2 & $0.644 \pm 0.003$ & $0.600 \pm 0.004$ & $0.545 \pm 0.008$ & $\mathbf{0.673 \pm 0.001}$ \\
    5 & $0.575 \pm 0.004$ & $0.596 \pm 0.009$ & $0.461 \pm 0.011$ & $\mathbf{0.663 \pm 0.002}$ \\
    10 & $0.507 \pm 0.005$ & $0.588 \pm 0.010$ & $0.397 \pm 0.007$ & $\mathbf{0.651 \pm 0.002}$ \\
    25 & $0.428 \pm 0.011$ & $0.575 \pm 0.012$ & $0.329 \pm 0.009$ & $\mathbf{0.625 \pm 0.003}$ \\
    50 & $0.381 \pm 0.008$ & $0.558 \pm 0.009$ & $0.298 \pm 0.012$ & $\mathbf{0.599 \pm 0.005}$ \\
    100 & $0.346 \pm 0.009$ & $0.533 \pm 0.009$ & $0.267 \pm 0.011$ & $\mathbf{0.566 \pm 0.004}$ \\
    200 & $0.318 \pm 0.007$ & $0.506 \pm 0.009$ & $0.244 \pm 0.010$ & $\mathbf{0.533 \pm 0.004}$ \\
    \bottomrule
  \end{tabular}
\end{table}

\begin{table}[h]
  \centering
  \caption{Overall average accuracy, CIFAR-100 / ResNet-50, all four heads, full imbalance-ratio sweep (mean $\pm$ sample stdev, 5 seeds). Best per row in bold.}
  \label{tab:transfer-cifar-resnet}
  \footnotesize
  \begin{tabular}{lcccc}
    \toprule
    $r$ & \texttt{onehot} & \texttt{equinorm} & \texttt{equiangular} & \texttt{etf\_project} \\
    \midrule
    1 & $0.693 \pm 0.000$ & $0.680 \pm 0.000$ & $\mathbf{0.696 \pm 0.000}$ & $0.686 \pm 0.000$ \\
    2 & $0.680 \pm 0.002$ & $0.673 \pm 0.002$ & $\mathbf{0.680 \pm 0.002}$ & $0.680 \pm 0.002$ \\
    5 & $0.638 \pm 0.002$ & $0.660 \pm 0.004$ & $0.630 \pm 0.005$ & $\mathbf{0.666 \pm 0.004}$ \\
    10 & $0.593 \pm 0.006$ & $0.645 \pm 0.004$ & $0.581 \pm 0.008$ & $\mathbf{0.650 \pm 0.004}$ \\
    25 & $0.524 \pm 0.005$ & $0.619 \pm 0.007$ & $0.511 \pm 0.007$ & $\mathbf{0.622 \pm 0.008}$ \\
    50 & $0.473 \pm 0.003$ & $0.594 \pm 0.008$ & $0.463 \pm 0.005$ & $\mathbf{0.594 \pm 0.009}$ \\
    100 & $0.431 \pm 0.006$ & $0.560 \pm 0.011$ & $0.422 \pm 0.007$ & $\mathbf{0.560 \pm 0.012}$ \\
    200 & $0.395 \pm 0.009$ & $\mathbf{0.531 \pm 0.007}$ & $0.387 \pm 0.009$ & $0.528 \pm 0.007$ \\
    \bottomrule
  \end{tabular}
\end{table}

\begin{table}[h]
  \centering
  \caption{Overall average accuracy, TinyImageNet / ResNet-50, all four heads, full imbalance-ratio sweep (mean $\pm$ sample stdev, 5 seeds). Best per row in bold.}
  \label{tab:transfer-tin-resnet}
  \footnotesize
  \begin{tabular}{lcccc}
    \toprule
    $r$ & \texttt{onehot} & \texttt{equinorm} & \texttt{equiangular} & \texttt{etf\_project} \\
    \midrule
    1 & $0.726 \pm 0.000$ & $0.711 \pm 0.000$ & $\mathbf{0.727 \pm 0.000}$ & $0.717 \pm 0.000$ \\
    2 & $0.721 \pm 0.001$ & $0.710 \pm 0.001$ & $\mathbf{0.721 \pm 0.001}$ & $0.716 \pm 0.001$ \\
    5 & $0.696 \pm 0.003$ & $0.703 \pm 0.002$ & $0.693 \pm 0.003$ & $\mathbf{0.711 \pm 0.002}$ \\
    10 & $0.667 \pm 0.004$ & $0.697 \pm 0.003$ & $0.659 \pm 0.004$ & $\mathbf{0.705 \pm 0.002}$ \\
    25 & $0.618 \pm 0.005$ & $0.687 \pm 0.004$ & $0.603 \pm 0.006$ & $\mathbf{0.694 \pm 0.002}$ \\
    50 & $0.573 \pm 0.006$ & $0.676 \pm 0.003$ & $0.555 \pm 0.010$ & $\mathbf{0.681 \pm 0.002}$ \\
    100 & $0.525 \pm 0.007$ & $0.660 \pm 0.003$ & $0.506 \pm 0.012$ & $\mathbf{0.664 \pm 0.004}$ \\
    200 & $0.480 \pm 0.008$ & $0.634 \pm 0.003$ & $0.466 \pm 0.012$ & $\mathbf{0.636 \pm 0.002}$ \\
    \bottomrule
  \end{tabular}
\end{table}

Repeating it for the other three cells of a
$2{\times}2$ backbone$\times$dataset grid
(full sweep in Tables~\ref{tab:transfer-cifar-vit}--\ref{tab:transfer-tin-resnet})
splits the two claims apart (with $8$ ratios $\times\,5$ seeds each). \textbf{Equinorm} is robust: at
$r{=}100$ the equinorm-over-\texttt{onehot} gain is $+16.0$ to $+18.7$ points
on the two ViT cells and $+12.9$ to $+13.5$ on the two ResNet-50 cells. \textbf{Equiangularity} is consistently detrimental on both backbones, but at
very different scales -- severe on ViT, mild on ResNet
($-6.9$ to $-11.4$ points on ViT vs.\ $-0.3$ to $-1.9$ on ResNet). \texttt{etf\_project} beats \texttt{equinorm} by $+2.3$--$7.5$
points on ViT but only $+0.2$--$0.8$ on ResNet-50 (and trails it at
CIFAR-100/ResNet-50, $r{=}200$), and on balanced ResNet-50 streams it loses
outright to plain \texttt{onehot}, reversing the uniform advantage seen on ViT.
Corollary~\ref{cor:polar} itself is unaffected -- it is an algebraic identity
and holds in all four cells by construction. This is consistent with ViT's class
directions being far more mutually correlated than ResNet-50's, so
decorrelating them (what \texttt{etf\_project} does beyond \texttt{equinorm})
has more room to help on ViT specifically. In short, equiangularity never helps and usually hurts in every cell tested;
how much orthonormality yields beyond equinorm is backbone-dependent, and
equinorm alone is the one intervention that both helps and generalizes across backbones.

\textbf{Clustering by equinorm status.} The four
heads do not just each move in a documented direction with $r$ -- they
form a 2-way cluster based on equinorm status. Define the \emph{cluster gap} as
$\min(\texttt{equinorm},\texttt{etf\_project}) -
\max(\texttt{onehot},\texttt{equiangular})$: positive means the two equinorm
heads both outrank both non-equinorm heads. Table~\ref{tab:cluster} shows
this gap is negative at $r{\le}2$ in every cell and turns positive between
$r{=}2$ and $r{=}5$ -- the same crossover already noted for the
equinorm-over-\texttt{onehot} gain (Figure~\ref{fig:sweep}) -- after which
it grows monotonically to $+0.13$ to $+0.19$ by $r{=}200$. Below the
crossover the equinorm/non-equinorm split is not yet the operative axis: at
$r{=}1$ the four heads span $11.2$ and $10.1$ points on the two ViT cells
but only $1.6$ points on both ResNet-50 cells, so ``head choice is
irrelevant at balance'' is a ResNet-50 statement, not a general one -- on
ViT the ranking at balance is instead set by \texttt{etf\_project}'s
decorrelation edge over \texttt{onehot}, which has nothing to do with
equinorm status. Past the crossover, equinorm status alone explains the
ranking in all four cells, consistent with the norm-driven mechanism above.

\begin{table}[h]
  \centering
  \caption{Cluster gap
  $\min(\texttt{equinorm},\texttt{etf\_project}) -
  \max(\texttt{onehot},\texttt{equiangular})$: negative means equinorm
  status does not yet explain the ranking; positive means the two equinorm
  heads both outrank both non-equinorm heads. Positive for every ratio from
  $r{=}5$ through $r{=}200$.}
  \label{tab:cluster}
  \footnotesize
  \begin{tabular}{lccc}
    \toprule
    cell & $r{=}2$ & $r{=}5$ & $r{=}200$ \\
    \midrule
    CIFAR-100 / ViT & $-0.034$ & $+0.022$ & $+0.160$ \\
    TinyImageNet / ViT & $-0.044$ & $+0.022$ & $+0.188$ \\
    CIFAR-100 / ResNet-50 & $-0.007$ & $+0.022$ & $+0.133$ \\
    TinyImageNet / ResNet-50 & $-0.011$ & $+0.007$ & $+0.154$ \\
    \bottomrule
  \end{tabular}
\end{table}

\end{document}